\newcommand\numberthis{\addtocounter{equation}{1}\tag{\theequation}}
\newcommand{\cblu}{\color{black}}
\newcommand{\cbla}{\color{black}}
\long\def\BOC#1\EOC{\message{(Commented text )}}
\long\def\BOCC#1\EOCC{\message{(Commented text )}}
\long\def\BOCCC#1\EOCCC{\message{(Commented text )}}
\long\def\optional#1{\empty}
\long\def\NB#1{}
\long\def\NBB#1{}
\def\o{\overline}
\def\ar{\leftarrow}
\def\bi{\begin{itemize}}
\def\ei{\end{itemize}}
\def\beq{\begin{equation}}
\def\eeq#1{\label{#1}\end{equation}}
\def\ba{\begin{array}}
\def\ea{\end{array}}
\def\mi#1{\mathit{#1}}
\def\ar{\leftarrow}
\def\rar{\rightarrow}
\def\false{\hbox{\sc false}}
\def\true{\hbox{\sc true}}
\def\qed{\quad \vrule height7.5pt width4.17pt depth0pt \medskip}
\DeclareSymbolFont{AMSa}{U}{msa}{m}{n}
\DeclareMathSymbol{\square}{\mathord}{AMSa}{"03}
\def\mu#1{\mathit{\underline{#1}}}
\def\fand{\otimes}
\def\for{\oplus}
\def\fneg{\neg}
\def\frar{\rar}
\def\bi{\begin{itemize}}
\def\ei{\end{itemize}}
\def\u{\upsilon}
\newtheorem{thm}{Theorem}
\newtheorem{cor}{Corollary}
\newtheorem{definition}{Definition}
\newtheorem{lemma}{Lemma} 
\newtheorem{example}{Example}
\def\L{\mathbb{L}}
\title{On the Semantic Relationship between Probabilistic Soft Logic and Markov Logic}
\author{Joohyung Lee \and Yi Wang\\
School of Computing, Informatics, and Decision Systems Engineering \\
Arizona State University\\
Tempe, AZ, 85287, USA \\
{\tt \{joolee,ywang485\}@asu.edu}}
\begin{document}
\maketitle

\begin{abstract}
Markov Logic Networks (MLN) and Probabilistic Soft Logic (PSL) are widely applied formalisms in Statistical Relational Learning, an emerging area in Artificial Intelligence that is concerned with combining logical and statistical AI. Despite their resemblance, the relationship has not been formally stated. In this paper, we describe the precise semantic relationship between them from a logical perspective. This is facilitated by first extending fuzzy logic to allow weights, which can be also viewed as a generalization of PSL, and then relate that generalization to MLN.  We observe that the relationship between PSL and MLN is analogous to the known relationship between fuzzy logic and Boolean logic, and furthermore the weight scheme of PSL is essentially a generalization of the weight scheme of MLN for the many-valued setting.\footnote{%
In Working Notes of the 6th International Workshop on Statistical Relational AI (StarAI 2016)
}
\end{abstract}


\section{Introduction}

Statistical relational learning (SRL) is an emerging area in Artificial Intelligence that is concerned with combining logical and statistical AI.
Markov Logic Networks (MLN) \cite{richardson06markov} and Probabilistic Soft Logic (PSL) \cite{kimmig12ashort,bach15hinge} are well-known formalisms in statistical relational learning, and have been successfully applied to a wide range of AI applications, such as natural language processing, entity resolution, collective classification, and social network modeling. Both of them combine logic and probabilistic graphical model in a single representation, where each formula is associated with a weight, and the probability distribution over possible worlds is derived from the weights of the formulas that are satisfied by the possible worlds. However, despite their resemblance to each other, the precise relationship between their semantics is not obvious. 
PSL is based on fuzzy interpretations that range over reals in $[0,1]$, and in this sense is more general than MLN. On the other hand, its syntax is restricted to formulas in clausal form, unlike MLN that allows any complex formulas. It is also not obvious how their models' weights are related to each other due to the different ways that the weights are associated with models. Originating from the machine learning research, these formalisms are equipped with several efficient inference and learning  algorithms, and some paper compares the suitability of one formalism over the other by experiments on specific applications \cite{beltagy14probabilistic}. 
On the other hand, the precise relationship between the two formalisms has not been formally stated. 


In this paper, we present a precise {\em semantic} relationship between them. We observe that the relationship is analogous to the well-known relationship between fuzzy logic and classical logic. Moreover, despite the different ways that weights of models are defined in each formalism, it turns out that they are essentially of the same kind. Towards this end, we introduce a weighted fuzzy logic as a proper generalization of PSL, which is also interesting on its own as an extension of the standard fuzzy logic to incorporate weighted models. The weighted fuzzy logic uses the same weight scheme as PSL, but associates weights to arbitrary fuzzy formulas. This intermediate formalism facilitates the comparison between PSL and MLN. We observe that the same analogy between fuzzy logic and Boolean logic carries over to between PSL and MLN. Analogous to that fuzzy logic agrees with Boolean logic on {\em crisp} interpretations, PSL and MLN agree on crisp interpretations, where their weights are proportional to each other. However, their maximum a posteriori (MAP) estimates do not necessarily coincide due to the differences between many-valued vs. Boolean models.

The paper is organized as follows. 
We first review each of MLN, fuzzy propositional logic, and PSL. Then we define a weighted fuzzy logic as a generalization of PSL. Using this we study the semantic relationship between PSL and MLN.


\section{Preliminaries} 

Although both PSL and MLN allow atoms to contain variables, those variables are understood in terms of grounding over finite domains where a universally quantified sentence is turned into multiple conjunctions and an existentially quantified sentence is turned into multiple disjunctions,  essentially resulting in propositional theories. For example, the ground atoms of the first-order signature $\sigma = \{p, a, b\}$, where $p$ is a unary predicate constant and $a$, $b$ are object constants, can be identified with the propositional atoms of the propositional signature $\{p(a), p(b)\}$. Thus for simplicity but without losing generality, we assume that the programs are propositional.\footnote{Inference and learning algorithms in these languages indeed utilize the relational structure, but in terms of defining the semantics, the assumption simplifies the presentation without the need to refer to fuzzy {\sl predicate} logic.}

\subsection{Review: Markov Logic Networks} \label{ssec:review-mln}


The following is a review of Markov Logic from~\cite{richardson06markov}.
A \emph{Markov Logic Network (MLN)}~$\L$ of a propositional signature $\sigma$ is a finite set of pairs \hbox{$\langle w: F\rangle$}, where $F$ is a propositional formula of~$\sigma$ and $w$ is a real number.

For any MLN $\L$ of signature $\sigma$, we define $\L_I$ to be the set of weighted formulas $w:F$ in~$\L$ such that $I\models F$. The {\em unnormalized weight} of an interpretation $I$ under~$\L$ is defined as 
\[ 
  W_\L(I) = exp\Bigg(\sum_{w:F \in \L_I} w\Bigg), 
\] 
and the {\em normalized weight (a.k.a. probability)} of $I$ under $\L$  is defined as
\[ 
  P_\L(I) = \frac{W_\L(I)}{\sum_{J\in PW}{W_\L(J)}}, 
\] 
where $PW$ (``Possible Worlds'') is the set of all interpretations of~$\sigma$. 

The basic idea of Markov Logic is to allow formulas to be {\em soft constrained}, where a model does not have to satisfy all formulas, but is associated with the weight that is obtained from the satisfied formulas.
An interpretation that does not satisfy certain formulas receives an ``(indirect) penalty'' because such formulas do not contribute to the weight of that interpretation. 


\subsection{Review: Fuzzy Propositional Formula}

The following is a review of fuzzy propositional formulas from~\cite{hajek98mathematics}. 
A fuzzy propositional signature~$\sigma$ is a set of symbols called {\em fuzzy atoms}. In addition, we assume the presence of a set ${\rm CONJ}$ of fuzzy conjunction symbols, a set ${\rm DISJ}$ of fuzzy disjunction symbols, a set ${\rm NEG}$ of fuzzy negation symbols, and a set ${\rm IMPL}$ of fuzzy implication symbols. 

A {\em fuzzy (propositional) formula} of $\sigma$ is defined recursively as follows.
\begin{itemize}
\item every fuzzy atom $p\in\sigma$ is a fuzzy formula;

\item every numeric constant $c$, where $c$ is a real number in $[0, 1]$, is a fuzzy formula;

\item if $F$ is a fuzzy formula, then $\fneg F$ is a fuzzy formula, where $\fneg \in
  {\rm NEG}$;

\item if $F$ and $G$ are fuzzy formulas, then $F\fand G$, $F\for G$, 
  and $F\frar G$ are fuzzy formulas, where $\fand\in{\rm CONJ}$,
  $\for\in{\rm DISJ}$, and $\frar\ \in{\rm IMPL}$.
\end{itemize}

The models of a fuzzy formula are defined as follows.
The {\em fuzzy truth values} are the real numbers in the range $[0,1]$. 
A \emph{fuzzy interpretation} $I$ of~$\sigma$ is a mapping from $\sigma$ into $[0, 1]$.

The fuzzy operators are functions mapping one or a pair of truth values into a truth value. Among the operators, $\fneg$ denotes a function from $[0, 1]$ into $[0, 1]$; $\fand$, $\for$, and $\frar$ denote functions from $[0, 1]\times[0, 1]$ into $[0, 1]$.
The actual mapping performed by each operator can be defined in many different ways, but all of them satisfy the properties that they are generalizations of the corresponding Boolean connectives.
\BOC
following conditions, which imply that the operators are generalizations of the corresponding classical propositional connectives:\footnote{%
We say that a function $f$ of arity $n$ is \emph{increasing in its $i$-th argument} ($1\leq i\leq n$) if $f(arg_1,\dots, arg_i,\dots, arg_n)\leq f(arg_1,\dots,arg_i^\prime,\dots, arg_n)$ for all arguments such that $arg_i \leq arg_i^\prime$; 
$f$ is said to be \emph{increasing} if it is increasing in all its arguments. The definition of \emph{decreasing} is similarly defined.  
}

\begin{itemize}
\item a fuzzy negation $\fneg$ is decreasing, and satisfies $\neg(0) = 1$ and $\neg(1) = 0$;

\item a fuzzy conjunction $\fand$ is increasing, commutative, associative, and $\fand(1, x)=x$ for all $x \in [0, 1]$;

\item a fuzzy disjunction $\for$ is increasing, commutative, associative, and $\for(0, x)=x$ for all $x\in [0, 1]$;

\item a fuzzy implication $\frar$ is decreasing in its first argument and increasing in its second argument; and $\frar\!\!(1, x) = x$ and $\frar\!\!(0, 0) = 1$ for all $x\in[0, 1]$.
\end{itemize}
\EOC
Figure~\ref{fig:operators} lists some examples of fuzzy operators.

\begin{figure}
\begin{center}
{\scriptsize 
\begin{tabular}{| c | l | l |}
			\hline
			\textbf{Symbol} & \textbf{Name} & \textbf{Definition} \\
			\hline
			$\fand_l$ & Lukasiewicz t-norm & $\fand_l(x, y)=max(x + y -1, 0)$ \\
			$\for_l$ & Lukasiewicz t-conorm & $\for_l(x, y)=min(x + y, 1)$ \\
			\hline
			$\fand_m$ & G\"odel t-norm & $\fand_m(x, y)=min(x, y)$ \\
			$\for_m$ & G\"odel t-conorm & $\for_m(x, y)=max(x, y)$ \\
			\hline
			$\fand_p$ & product t-norm & $\fand_p(x, y)=x \cdot y$ \\
			$\for_p$ & product t-conorm & $\for_p(x, y)=x+ y -x\cdot y$ \\
			\hline
			$\neg_s$ & standard negator & $\neg_s(x) =
                        1-x$\\ \hline

			$\rar_r$ & R-implicator induced by $\fand_m$ &
                        $\rar_r\!\!(x, y)=\begin{cases}1 & \text{if}\ x \leq y\\y
                          & \text{otherwise}\end{cases}$\\
			$\rar_s$ & S-implicator induced by $\fand_m$
                        &  $\rar_s\!\!(x, y) =
                        max(1-x, y)$\\  
                        $\rar_l$ & Implicator induced by $\fand_l$ & $\rar_l\!\!(x,y) = min(1-x+y, 1)$ \\
\hline
\end{tabular}
\caption{Some t-norms, t-conorms, negator, and implicators}
\label{fig:operators}
}
\end{center}
\end{figure}

The \emph{truth value} of a fuzzy propositional formula $F$ under~$I$, denoted $\u_I(F)$, is defined recursively as follows:  
\begin{itemize}
\item  for any atom $p\in\sigma$,\ \ $\u_I(p) = I(p)$;
\item  for any numeric constant ${c}$,\ \  $\u_I({c}) = c$;
\item  $\u_I(\neg F) = \neg(\u_I(F))$;
\item  $\u_I(F\odot G) = \odot(\u_I(F), \u_I(G))$ \ \ \ ($\odot\in\{\fand, \for, \frar\}$).
\end{itemize}
(For simplicity, we identify the symbols for the fuzzy operators with the truth value functions represented by them.) 

\begin{definition}\label{def:fuzzy-m}\optional{def:fuzzy-m}
We say that a fuzzy interpretation $I$ {\em satisfies} a fuzzy formula~$F$ if \hbox{$\u_I(F) = 1$}, and denote it by $I\models F$. We call such $I$ a {\em fuzzy model} of~$F$.
\end{definition}

We say that a fuzzy interpretation $I$ is {\em Boolean} if $I(p)$ is either $0$ or $1$ for each fuzzy atom $p$. Clearly, we may identify a Boolean fuzzy interpretation $I$ with the classical propositional interpretation by identifying $1$ with $\true$ and $0$ with $\false$.

Any fuzzy propositional formula whose numeric constants are restricted to $0$ and $1$ can be identified with a classical propositional formula. For such a formula $F$, due to the fact that fuzzy operators are generalizations of their Boolean counterparts, it is clear that Boolean fuzzy models of $F$ are precisely the Boolean models of $F$ when $F$ is viewed as a classical propositional formula. 


\subsection{Review: Probabilistic Soft Logic}

The following is a review of PSL from~\cite{kimmig12ashort}, but is stated using the terminology from fuzzy logic.
A {\em PSL program}~$\Pi$ is a set of weighted formulas 
$\langle w: R\ \verb!^!  k\rangle$ where 
\begin{itemize}
\item  $w$ is a nonnegative real number,
\item  $R$ is a fuzzy propositional formula of the form \footnote{We understand 
$G\ar F$ as an alternative notation for $F\rar G$.}
       \beq
          a \ar_l b_1 \fand_l \dots \fand_l b_n  
      \eeq{eq:psl-rule}
      where $n\ge 0$, each of $a, b_1, \dots, b_n$ is a fuzzy atom possibly preceded by the standard negator, and 
\item $k\in\{1,2\}$.\footnote{PSL also allows linear equality and inequality constraints, which is outside logical theories, and we omit here for simplicity. Interpretation $I$ that violates any of them gets $f_\Pi(I) =0$.}
\end{itemize}
For each rule $R$ of the form~\eqref{eq:psl-rule}, the {\em distance to satisfaction} under interpretation $I$ is defined as 
\beq
   d_R(I) = max\{0,\ \  \u_I(b_1\fand_l\cdots\fand_l b_n)- \u_I(a)\}.
\eeq{distance}
Given an interpretation $I$ of $\Pi$, the {\em unnormalized density function} over $I$ under $\Pi$ is defined as
\[
   \hat{f}_\Pi(I) = 
   exp\Bigg(-\sum_{\langle w: R \string^ k\rangle \in \Pi} w\cdot d_R(I)^k \Bigg), 
\]
and the {\em probability density function} over $I$ under $\Pi$ is defined as
\[
   f_\Pi(I) = \frac{\hat{f}_\Pi(I)}{Z_\Pi}, 
\]
where $Z_\Pi$ is the normalization factor 
\[
   \int_I \hat{f}_\Pi(I) .
\]

The probability density function $f_\Pi(I)$ is defined similar to the weight $W_\L(I)$ in MLN. Different from MLN where the weight of an interpretation comes from the sum over the weights of all formulas that are satisfied (thus the penalty is implicit), in PSL, the probability density function of an interpretation is obtained from the sum over the ``penalty'' (i.e., the weight times the distance to satisfaction) from each formula, where the penalty is $0$ when the formula is satisfied, and becomes bigger as the formula gets unsatisfied more (i.e., the fuzzy truth value of the body gets bigger than the fuzzy truth value of the head). When the formula is most unsatisfied (i.e., the body evaluates to $1$ and the head evaluates to $0$), the penalty is $w$, the maximum. A novel idea here is that each formula contributes to the penalty to a certain graded truth degree (including~$0$). 
Along with the restriction imposed on the syntax of fuzzy formulas (using the rule form~\eqref{eq:psl-rule}), MAP inference in PSL can be reduced to a convex optimization problem in continuous space, thereby enabling efficient computation.




\section{Weighted Fuzzy Logic as a Generalization of PSL}

\subsection{Weighted Fuzzy Logic}

Here we define a weighted fuzzy logic as a generalization of PSL. The idea is simple. We take the standard fuzzy logic and extend it by applying the log-linear weight scheme of PSL. 


A {\em weighted propositional fuzzy logic theory} $\Pi$ is a set of weighted formulas\hbox{$\langle w: F\verb!^! k\rangle$}, where
\begin{itemize}
\item  $w$ is a real number,
\item  $F$ is a fuzzy propositional formula, and 
\item  $k\in\{1,2\}$.
\end{itemize}

The {\em unnormalized density function} of a fuzzy interpretation $I$ under $\Pi$ is defined as
\[
  \hat{f}_{\Pi}(I) = exp\Bigg(-\sum_{\langle w: F \string^ k\rangle \in \Pi} w\cdot (1-\u_I(F))^k\Bigg), 
\]
and the {\em probability density function} of $I$ under $\Pi$ is defined as 
\[
  f_{\Pi}(I) = \frac{\hat{f}_\Pi(I)}{Z_\Pi},
\]
where $Z_\Pi$ is the normalization factor 
\[ 
  \int_I \hat{f}_\Pi(I).
\]


Notice that $1-\u_I(F)$ represents the distance to satisfaction in the general case. It is $0$ when $I$ satisfies $F$, and becomes bigger as $\u_I(F)$ gets farther from $1$. {\cblu This notion of distance to satisfaction for an arbitrary formula is also used in Probabilistic Similarity Logic \cite{brocheler10probabilistic}, and indeed, the weighted fuzzy logic is very similar to Probabilistic Similarity Logic. Both of them employ arbitrary fuzzy operators, not restricted to the Lukasiewicz fuzzy operators. However, the languages are not the same. In Probabilistic Similarity Logic, atomic sentences are of the form called {\sl similarity statements}, $A\stackrel{s}{=}B$, where $s$ is some similarity measure, and $A$, $B$ are entities or sets that can even be represented in an object-oriented syntax.
On the other hand, atomic sentences of the weighted logic is a fuzzy atom, same as in PSL. 
As we show below it is easy to view the weighted fuzzy logic as a generalization of PSL, and it serves as a convenient intermediate language to relate PSL and MLN.\footnote{%
Although PSL and Probabilistic Similarity Logic seem to be closely related, the formal relationship between them has not been discussed in the literature to the best of our knowledge. }


\cbla



\subsection{Relation to PSL}


The following lemma tells us how the notions of distance to satisfaction in PSL and in the weighted fuzzy logic are related.

\begin{lemma}\label{lem:distance}
For any rule $R$ of the form \eqref{eq:psl-rule} and any interpretation $I$,
\[ d_R(I) = 1-\u_I(R).
\]
\end{lemma}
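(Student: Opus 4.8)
The plan is to reduce both sides of the claimed identity to the same algebraic expression in two truth values. Abbreviate $x=\u_I(b_1\fand_l\cdots\fand_l b_n)$ and $y=\u_I(a)$, both lying in $[0,1]$. By the definition~\eqref{distance} of distance to satisfaction, the left-hand side is immediately $d_R(I)=\max\{0,\ x-y\}$, so nothing more is needed there.

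For the right-hand side I would unfold $\u_I(R)$ through the recursive definition of truth values. Since $R$ of the form~\eqref{eq:psl-rule} abbreviates $(b_1\fand_l\cdots\fand_l b_n)\rar_l a$, the clause for binary connectives gives $\u_I(R)=\rar_l(x,y)$, and substituting the Lukasiewicz implicator from Figure~\ref{fig:operators} yields $\u_I(R)=\min(1-x+y,\ 1)$. Hence $1-\u_I(R)=1-\min(1-x+y,\ 1)=\max\big(1-(1-x+y),\ 1-1\big)=\max\{x-y,\ 0\}$, where I use the elementary identity $1-\min(u,v)=\max(1-u,1-v)$. This is exactly $d_R(I)$, which proves the lemma.

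There is no real obstacle: the content of the statement is precisely that the Lukasiewicz implication was designed so that its ``gap from $1$'' measures how much the antecedent exceeds the consequent, and the proof is just matching two definitions. The only point I would spell out carefully is the degenerate case $n=0$, where the body is the empty Lukasiewicz conjunction; under the usual convention that it evaluates to the constant $1$, both sides reduce to $1-\u_I(a)$, so the identity still holds. I would also be explicit about citing the exact row of Figure~\ref{fig:operators} for $\rar_l$ and the corresponding clause in the definition of $\u_I$, since that is where all the work happens.
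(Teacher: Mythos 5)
Your proof is correct and essentially identical to the paper's: both unfold $\u_I(R)$ via the Lukasiewicz implicator to get $\min(1-x+y,1)$ and then simplify $1-\min(1-x+y,1)$ to $\max\{0,\,x-y\}$ (the paper phrases this by first writing $1-\u_I(R)=\max\{0,\,1-\u_I(R)\}$, you via the identity $1-\min(u,v)=\max(1-u,1-v)$, which is the same algebra). Your explicit note on the degenerate case $n=0$ is a small extra care the paper omits but does not change the argument.
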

\begin{proof}
{\small
\begin{align*}
 & 1- \u_I(R) \\
          &= max\{0,\ 1- \u_I(R)\} \\
          &= max\{0,\ 1- \u_I(a\ar_l b_1\fand_l\dots\fand_l b_n)\} \\
          &= max\{0,\ 1- min\{1 - \u_I(b_1\fand_l\cdots\fand_l b_n) +  \u_I(a),\ 1\}\}\\
          &= max\{0,\ \u_I(b_1\fand_l\cdots\fand_l b_n)- \u_I(a)\}) \\
          &= d_R(I).  \qed
\end{align*}
}
\end{proof}

In Lemma~\ref{lem:distance}, it is essential that rules~\eqref{eq:psl-rule} use Lukasiewicz fuzzy operators. The lemma does not hold with an arbitrary selection of fuzzy operators as the following example indicates. 

\begin{example}\label{ex:distance}
Consider G\"odel t-norm $\otimes_m$ and its residual implicator $\rar_r$. 
Let $R$ be $q\ar_r p$ and $I$ an interpretation $\{(p, 0.6), (q, 0.4)\}$.  $d_R(I)$ is $0.2$, while $1-\u_I(p\rar_r q)$ is $1-0.4 = 0.6$.
\end{example}


It follows from Lemma~\ref{lem:distance} that PSL can be easily viewed as a special case of the weighted fuzzy logic. 

\begin{thm}\label{thm:generalPSL2PSL}
Given any PSL program~$\Pi$ and any fuzzy interpretation $I$, the definition of $f_\Pi(I)$ when $\Pi$ is viewed as the weighted fuzzy logic coincides with the definition of $f_\Pi(I)$ when $\Pi$ is viewed as a PSL program.
\end{thm}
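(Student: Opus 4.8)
The plan is to unwind both definitions of $f_\Pi(I)$ side by side and observe that they differ only in how the contribution of each weighted rule is written, then invoke Lemma~\ref{lem:distance} to identify the two expressions. Concretely, a PSL program $\Pi$ consists of weighted formulas $\langle w: R\mathbin{\verb!^!}k\rangle$ where $R$ has the clausal form \eqref{eq:psl-rule}; since every such $R$ is in particular a fuzzy propositional formula, $\Pi$ is syntactically a legal weighted fuzzy logic theory, so it makes sense to speak of $f_\Pi(I)$ under both readings.

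First I would write out the PSL unnormalized density, $\hat f_\Pi(I) = \exp\bigl(-\sum_{\langle w:R\string^k\rangle\in\Pi} w\cdot d_R(I)^k\bigr)$, and the weighted fuzzy logic unnormalized density, $\hat f_\Pi(I) = \exp\bigl(-\sum_{\langle w:R\string^k\rangle\in\Pi} w\cdot (1-\u_I(R))^k\bigr)$. By Lemma~\ref{lem:distance}, for each rule $R$ of the form \eqref{eq:psl-rule} we have $d_R(I) = 1-\u_I(R)$, hence $d_R(I)^k = (1-\u_I(R))^k$ for $k\in\{1,2\}$, and the two summations are term-by-term equal. Therefore the two unnormalized densities $\hat f_\Pi(I)$ agree for every fuzzy interpretation $I$.

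Next I would push this through normalization: the normalization factor $Z_\Pi = \int_I \hat f_\Pi(I)$ is the same integral in both cases because the integrand is the same function of $I$, so the normalized densities $f_\Pi(I) = \hat f_\Pi(I)/Z_\Pi$ coincide as well. This completes the argument.

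There is no real obstacle here — the theorem is essentially a bookkeeping corollary of Lemma~\ref{lem:distance}. The only point that deserves a word of care is that Lemma~\ref{lem:distance} applies precisely because PSL rules are built from the Lukasiewicz operators $\ar_l$ and $\fand_l$; for an arbitrary PSL-shaped rule under other fuzzy operators the identity $d_R(I) = 1-\u_I(R)$ fails, as Example~\ref{ex:distance} shows, so the coincidence of the two semantics is genuinely tied to the Lukasiewicz choice baked into the PSL syntax. It is worth stating this explicitly so the reader does not over-generalize the theorem.
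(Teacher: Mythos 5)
Your proposal is correct and follows exactly the paper's route: the paper's own proof is just ``Immediate from Lemma~\ref{lem:distance},'' and your write-up is the same argument with the term-by-term identification and the normalization step spelled out explicitly. Nothing is missing or over-claimed.
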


\begin{proof}
Immediate from Lemma~\ref{lem:distance}. \qed
\end{proof}

Due to this theorem,  we will call the weighted fuzzy logic also as {\em generalized PSL (GPSL)}. 

\BOC
It is known that inference in PSL is $\#$P-hard. Since PSL is a special case of GPSL, it is clear that the inference in GPSL is $\#$P-hard as well.
Obviously, the advantage of PSL over GPSL is that by restricting the fuzzy operators to be Lukasiewicz t-norm and its derived operators,  inference in PSL becomes a convex optimization problem in continuous space \cite{bach15hinge}.
\EOC

Viewing PSL as a special case of the weighted fuzzy logic allows us to apply the mathematical results known from fuzzy logic to the context of PSL. Here is one example, which tells us that the different versions of PSL defined in  \cite{kimmig12ashort} and \cite{bach15hinge} are equivalent despite the different syntax adopted in each of them.
To be precise, PSL in \cite{bach15hinge} is defined for clausal form only, such as \eqref{clause} below, while in~\cite{kimmig12ashort} it is defined for rule form \eqref{eq:psl-rule} only.


%
%
%
%
%
When $L$ is either an atom $A$ or $\neg_s A$, by $\o{L}$ we denote a literal complementary to $L$, i.e., $\o{L} = \neg_s A$ if $L$ is $A$, and  $\o{L} = A$ if $L = \neg_s A$. 
The following equivalences are known from fuzzy logic.

\begin{lemma} \label{lem:equiv} For any formulas $F$ and $G$, and any literals $L_i$ ($1\le i\le n$), 
\begin{itemize}
\item[(a)] $F\rar_l G$ is equivalent to $\neg_s F\for_l G$. 
\item[(b)] $\neg_s (L_1\fand_l\cdots\fand_l L_n)$ is equivalent to 
       $(\o{L_1}\for_l\cdots\for_l \o{L_n})$.
\end{itemize}
\end{lemma}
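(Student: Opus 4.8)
The plan is to establish each equivalence by unfolding the definitions of the Lukasiewicz operators and the standard negator and checking that the truth-value functions agree pointwise for every fuzzy interpretation $I$. Recall that two fuzzy formulas are equivalent precisely when $\u_I$ assigns them the same value under every $I$, so in each case it suffices to compute both sides as real-valued functions of the underlying atom values and simplify.

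For part (a), I would compute $\u_I(F\rar_l G) = \min\{1-\u_I(F)+\u_I(G),\ 1\}$ directly from the definition of $\rar_l$ in Figure~\ref{fig:operators}, and separately $\u_I(\neg_s F\for_l G) = \for_l(1-\u_I(F),\ \u_I(G)) = \min\{(1-\u_I(F))+\u_I(G),\ 1\}$, using the definitions of $\neg_s$ and $\for_l$. The two expressions are syntactically identical, so the equivalence is immediate. For part (b), I would argue by induction on $n$. The base case $n=1$ is just $\u_I(\neg_s L_1) = 1 - \u_I(L_1) = \u_I(\o{L_1})$, which holds because $\neg_s$ is an involution (if $L_1 = \neg_s A$ then $1-\u_I(\neg_s A) = 1-(1-\u_I(A)) = \u_I(A) = \u_I(\o{L_1})$). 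For the inductive step, I would apply the De Morgan law for the Lukasiewicz pair: $\u_I(\neg_s(X\fand_l Y)) = 1 - \max\{\u_I(X)+\u_I(Y)-1,\ 0\} = \min\{(1-\u_I(X))+(1-\u_I(Y)),\ 1\} = \u_I(\neg_s X \for_l \neg_s Y)$, then push $\neg_s$ inward on the conjunction of the first $n-1$ literals using the induction hypothesis and on $L_n$ using the base case, and finally reassociate using associativity of $\for_l$.

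I do not expect a serious obstacle here; both parts reduce to elementary algebra on $\min$, $\max$, and affine functions on $[0,1]$. The one point requiring a little care is the handling of literals in part (b): since an $L_i$ may already be a negated atom, the claim ``$\u_I(\neg_s L_i) = \u_I(\o{L_i})$'' depends on $\neg_s$ being involutive rather than on $L_i$ being a bare atom, and this should be stated explicitly rather than glossed over. Everything else is a matter of substituting the definitions from Figure~\ref{fig:operators} and collecting terms.
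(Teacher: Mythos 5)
Your computation is correct. Note that the paper itself offers no proof of this lemma---it is simply cited as ``known from fuzzy logic''---so there is no argument to compare against; your proposal supplies the standard verification. Both parts check out: in (a) the two sides literally evaluate to the same expression $\min\{1-\u_I(F)+\u_I(G),\,1\}$, and in (b) the induction via the Lukasiewicz De Morgan identity $1-\max\{x+y-1,0\}=\min\{(1-x)+(1-y),1\}$ goes through, with your explicit appeal to the involutivity of $\neg_s$ for the literal case being exactly the point worth making precise (and the one that fails for, e.g., the G\"odel negation, as the paper's Example~\ref{ex:equiv-counter} illustrates).
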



The following lemma tells us that the clausal form using Lukasiewicz t-conorm can be written in many different forms.
\begin{lemma}
For any literals $L_i$ ($1\le i\le n$), 
\beq
   L_1\for_l\cdots\for_l L_m \for_l L_{m+1}\for_l\cdots \for_l L_n 
\eeq{clause}
is equivalent to
\[
  \o{L_1}\fand_l\cdots\fand_l\o{L_m}\rar_l L_{m+1}\for_l\cdots \for_l L_n 
\]
where $n\ge m\ge 0$. 
\end{lemma}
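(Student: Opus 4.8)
The plan is to obtain the equivalence by a short chain of rewritings, invoking only Lemma~\ref{lem:equiv} together with two elementary facts: that literal complementation is an involution, i.e.\ $\o{\o{L}} = L$ (immediate from the definition of $\o{L}$), and that $\for_l$ is associative, so that the unparenthesized clause \eqref{clause} is well defined and subformulas may be freely regrouped. Here, as throughout, two fuzzy formulas are called \emph{equivalent} when they receive the same truth value $\u_I(\cdot)$ under every fuzzy interpretation $I$; since $\u_I$ is defined compositionally, equivalence is a congruence for the fuzzy connectives, so replacing a subformula by an equivalent one preserves the truth value of the whole.

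Concretely, I would set $F := \o{L_1}\fand_l\cdots\fand_l\o{L_m}$ and $G := L_{m+1}\for_l\cdots\for_l L_n$, so that the right-hand side of the claimed equivalence is exactly $F\rar_l G$. By Lemma~\ref{lem:equiv}(a), $F\rar_l G$ is equivalent to $\neg_s F\for_l G$. Since $\o{L_1},\dots,\o{L_m}$ are themselves literals, Lemma~\ref{lem:equiv}(b) applies to $\neg_s F = \neg_s(\o{L_1}\fand_l\cdots\fand_l\o{L_m})$ and yields equivalence with $\o{\o{L_1}}\for_l\cdots\for_l\o{\o{L_m}}$, which by involutivity of complementation is $L_1\for_l\cdots\for_l L_m$. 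Substituting back, $F\rar_l G$ is equivalent to $(L_1\for_l\cdots\for_l L_m)\for_l(L_{m+1}\for_l\cdots\for_l L_n)$, and associativity of $\for_l$ collapses this to \eqref{clause}, as desired.

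The only point requiring genuine care — and the place where the proof should spell things out rather than just cite Lemma~\ref{lem:equiv} — is the degenerate indices permitted by $n\ge m\ge 0$, under the standard convention that an empty $\fand_l$-conjunction denotes the constant $1$ and an empty $\for_l$-disjunction the constant $0$. When $m=0$ the antecedent is the constant $1$, and $\rar_l(1,y)=\min(1-1+y,1)=y$ shows $1\rar_l G$ is equivalent to $G$, matching the empty prefix in \eqref{clause} (with $0\for_l y=y$); the relevant instance of Lemma~\ref{lem:equiv}(b) here is just $\neg_s(1)$ equivalent to $0$. Symmetrically, when $m=n$ the consequent is the constant $0$, and $\rar_l(x,0)=\min(1-x,1)=1-x=\neg_s(x)$ shows $F\rar_l 0$ is equivalent to $\neg_s F$, again matching \eqref{clause}. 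Apart from these boundary checks, the argument is a two-line substitution, so I do not expect any real obstacle.
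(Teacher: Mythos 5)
Your proof is correct and is essentially the paper's own argument run in the opposite direction: the paper starts from \eqref{clause}, applies Lemma~\ref{lem:equiv}(a) to introduce the implication and then Lemma~\ref{lem:equiv}(b) to turn the negated disjunction into a conjunction of complements, while you start from the rule form and unfold it back to the clause using the same two parts of Lemma~\ref{lem:equiv} plus involutivity of complementation. Your explicit treatment of the degenerate cases $m=0$ and $m=n$ (and your application of part (b) exactly as stated, to the conjunction $\o{L_1}\fand_l\cdots\fand_l\o{L_m}$, rather than to its dual) is a small improvement in care over the paper's version, but the approach is the same.
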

\begin{proof}
By Lemma~\ref{lem:equiv}~(a), formula~\eqref{clause} is equivalent to 
\[
   \neg_s (L_1\for_l\cdots\for_l L_m)\rar_l L_{m+1}\for_l\cdots \for_l L_n 
\]
and by Lemma~\ref{lem:equiv}~(b), the latter is equivalent to 
\[
 \o{L_1}\fand_l\cdots\fand_l\o{L_m}\rar_l L_{m+1}\for_l\cdots \for_l L_n  . \qed
\]
\end{proof}

It follows from Lemma~\ref{lem:distance} that the probability density of an interpretation does not change when the formula is replaced with another equivalent formula. This tells us that PSL rules of the form~\eqref{eq:psl-rule} can be rewritten as any other equivalent formulas. For instance, PSL rule 
\beq
\ba {rrclcr}
 w:& \ \ \    a &\ \ \ar_l\ \ & b\fand_l c & \ \ \ \ \   &  \string^ 1 
\ea
\eeq{kimmig}
can be equivalently rewritten as any of the following ones.
\beq
\ba {rrclcr}
 w:& \neg_s b &\ \ar_l \ & \neg_s a\fand_l c   &  \string^ 1,  \\
 w:& \neg_s c &\ar_l& \neg_s a\fand_l b   &  \string^ 1,   \\
 w:& a\for_l \neg_s b &\ar_l&  c   &  \string^ 1,  \\
 w:& a\for_l \neg_s c &\ar_l&  b   &  \string^ 1,  \\
 w:& \neg_s b\for_l \neg_s c &\ar_l&  \neg_s a   &  \string^ 1,  \\
 w:& \ \ a \for_l \neg_s b \for_l \neg_s c  &   &  &  \string^ 1,  \\ 
 w:&  0 & \ar_l & \neg_s a \fand_l b \fand_l c  \ \  &  \string^ 1 . 
\ea
\eeq{barch}

\BOC
\begin{lemma}
\[ a_1\ar_l b_1\fand_l\cdots\fand_l b_n\]
is equivalent to 
\[ a_1\for_l \o{b_1}\for_l\cdots\for_l \o{b_n}.\]
\end{lemma}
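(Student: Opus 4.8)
The plan is to reduce the claim to the two equivalences of Lemma~\ref{lem:equiv} together with the commutativity and associativity of the Lukasiewicz t-conorm $\for_l$, following the same pattern as the proof of the preceding lemma. Recall the convention (from the footnote to~\eqref{eq:psl-rule}) that $a_1\ar_l b_1\fand_l\cdots\fand_l b_n$ abbreviates $(b_1\fand_l\cdots\fand_l b_n)\rar_l a_1$; note also that for $\o{b_i}$ to be meaningful each $b_i$ must be a literal, as it always is in rules of the form~\eqref{eq:psl-rule}. Throughout, two fuzzy formulas count as equivalent when they receive the same truth value $\u_I$ under every fuzzy interpretation $I$; since $\u_I$ is defined compositionally, replacing a subformula by an equivalent one preserves equivalence of the whole, and this is what licenses chaining the rewriting steps below.

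First I would apply Lemma~\ref{lem:equiv}(a) with $F = b_1\fand_l\cdots\fand_l b_n$ and $G = a_1$, rewriting $(b_1\fand_l\cdots\fand_l b_n)\rar_l a_1$ as $\neg_s(b_1\fand_l\cdots\fand_l b_n)\for_l a_1$. Next I would apply Lemma~\ref{lem:equiv}(b) to the subformula $\neg_s(b_1\fand_l\cdots\fand_l b_n)$, turning it into $\o{b_1}\for_l\cdots\for_l\o{b_n}$, so that the whole formula becomes equivalent to $(\o{b_1}\for_l\cdots\for_l\o{b_n})\for_l a_1$. Finally, commutativity and associativity of $\for_l$ reorder this to $a_1\for_l\o{b_1}\for_l\cdots\for_l\o{b_n}$, which is the right-hand side of the claim.

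I do not anticipate a genuine obstacle here: the reasoning is purely syntactic, and in fact the statement can alternatively be read off as an instance of the lemma just proved, substituting $\o{b_i}$ for the clause literals, using that $\o{\o{L}}$ is $L$ for a literal $L$, and reordering by commutativity. The only points worth a sentence of care are the notational convention for $\ar_l$ noted above and the degenerate case $n=0$, where the left-hand side is $a_1$ preceded by the empty conjunction; since the empty $\fand_l$ evaluates to $1$ and $1\rar_l a_1$ evaluates to $a_1$, this matches the empty $\for_l$ on the right, so the statement holds there too.
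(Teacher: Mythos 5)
Your proposal is correct and matches the paper's own argument, which simply declares the equivalence ``immediate from De Morgan's laws and the definition of $\rar_l$'' --- exactly the two steps you carry out via Lemma~\ref{lem:equiv}(a) and~(b) followed by commutativity of $\for_l$. Your added care about the convention that $a_1\ar_l B$ abbreviates $B\rar_l a_1$ and about the degenerate case $n=0$ is harmless extra detail, not a departure from the paper's route.
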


\begin{proof}
The proof is immediate from the fact that De Morgan's Laws holds in the fuzzy logic and the definition of $\rar_l$. \qed
\end{proof}
\EOC

As noted above, the syntax of PSL in \cite{bach15hinge} is clausal form only, such as the second to the last formula in~\eqref{barch}, while the syntax of PSL in \cite{kimmig12ashort} is rule form such as \eqref{kimmig}.
The result above tells us that the definitions of PSL defined in \cite{kimmig12ashort} and \cite{bach15hinge} are equivalent despite the different syntax adopted there. 

On the other hand, similar rewriting using other t-norms and their derived operators may not necessarily yield an equivalent formula because not every selection of fuzzy operators satisfy Lemma~\ref{lem:equiv} even if they are generalizations of the corresponding Boolean connectives.

\begin{example}\label{ex:equiv-counter}
Consider again G\"odel t-norm $\otimes_m$ and its residual implicator $\rar_r$. The negation $\neg_m$ induced from $\neg_m x = x\rar_r 0$ is 
\[
\neg_m x =\begin{cases}
1 & \text{if $x=0$}\\
0 & \text{if $x>0$.}
\end{cases}
\]
For the interpretation $I=\{(p, 0.4), (q, 0.5)\}$, we have $\u_I(\neg_m p \for_m q)=0\oplus_m 0.5=0.5$, but $\u_I(p\rar_r q)=1$. In other words, $\neg_m p\for_m q$ is not equivalent to $p\rar_r q$.
\end{example}

In the literature on PSL \cite{kimmig12ashort,bach15hinge}, the selection of Lukasiewicz t-norm is motivated by the computational efficiency gained by reducing MAP inferences to convex optimization problems.
This section presents yet another justification of Lukasiewicz t-norm in PSL from the logical perspective.



\section{GPSL : MLN = Fuzzy Logic : Boolean Logic}

Like fuzzy logic is a many-valued extension of Boolean logic, we may view GPSL as a many-valued extension of MLN.


For any classical propositional formula $F$, let $F^\mi{fuzzy}$ be the fuzzy formula obtained from $F$ by replacing $\bot$ with ${0}$,  $\top$ with ${1}$, $\neg$ with any fuzzy negation symbol, $\land$ with any fuzzy conjunction symbol, $\lor$ with any fuzzy disjunction symbol, and $\rar$ with any fuzzy implication symbol. 

For any GPSL program $\Pi$, by $TW_{\Pi}$ (``total weight'') we denote
\[
exp\left(\sum_{\langle w: F \string^ k\rangle \in \Pi}w\right).
\]

For any MLN $\L$, let $\Pi_{\L}$ be the GPSL program obtained from $\L$ by replacing each weighted formula $w: F$ in $\L$ with $w: F^\mi{fuzzy} \string^ k$, where $k$ is either $1$ or $2$.
The following theorem tells us that, for any Boolean interpretation $I$, its weight under MLN $\L$ is proportional to the unnormalized probability density under the GPSL program $\Pi_\L$. 

\begin{thm}\label{thm:w-f}
For any MLN $\L$ and any Boolean interpretation~$I$, 
\[
  W_{\L}(I) = TW_{\Pi_\L}\cdot \hat{f}_{\Pi_\L}(I).
\]
\end{thm}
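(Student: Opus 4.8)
The plan is to unfold both sides in terms of the weights attached to the formulas of $\L$ and show the exponents agree. First I would rewrite the right-hand side: by definition $TW_{\Pi_\L} = exp\big(\sum_{w:F\in\L} w\big)$, since $\Pi_\L$ contains exactly one rule $w: F^\mi{fuzzy} \string^ k$ for each $w:F$ in $\L$, carrying the same weight $w$. So $TW_{\Pi_\L}\cdot \hat f_{\Pi_\L}(I) = exp\Big(\sum_{w:F\in\L} w \;-\; \sum_{w:F\in\L} w\cdot (1-\u_I(F^\mi{fuzzy}))^k\Big)$.

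The key observation is that, because $I$ is Boolean and the numeric constants of $F^\mi{fuzzy}$ are $0$ and $1$, the truth value $\u_I(F^\mi{fuzzy})$ is again either $0$ or $1$, and in fact equals $1$ precisely when $I\models F$ (viewing $F$ as a classical formula) and $0$ otherwise. This is exactly the fact recalled just above Definition~\ref{def:fuzzy-m}: the fuzzy operators are generalizations of their Boolean counterparts, so on a Boolean interpretation the recursive evaluation of $\u_I$ reproduces the classical truth-table evaluation; a one-line structural induction on $F$ makes this precise. Consequently $(1-\u_I(F^\mi{fuzzy}))^k$ equals $0$ when $I\models F$ and $1$ when $I\not\models F$, and this holds for both choices $k\in\{1,2\}$ since $0^k=0$ and $1^k=1$ — so the exponent $k$ is irrelevant here.

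It follows that $\sum_{w:F\in\L} w\cdot (1-\u_I(F^\mi{fuzzy}))^k = \sum_{w:F\in\L,\ I\not\models F} w$. Plugging this in and splitting $\sum_{w:F\in\L} w = \sum_{w:F\in\L,\ I\models F} w + \sum_{w:F\in\L,\ I\not\models F} w$, the ``$I\not\models F$'' parts cancel, leaving $exp\big(\sum_{w:F\in\L,\ I\models F} w\big) = exp\big(\sum_{w:F\in\L_I} w\big) = W_\L(I)$, which is the claim.

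There is no real obstacle: the argument is essentially bookkeeping with the exponent of $exp$. The only point that deserves care — and the one I would state as a small lemma or cite to the remark before Definition~\ref{def:fuzzy-m} — is that $\u_I$ restricted to Boolean interpretations and to formulas with constants in $\{0,1\}$ coincides with classical propositional evaluation, so that the ``distance to satisfaction'' $1-\u_I(F^\mi{fuzzy})$ is the $0/1$ indicator of $I\not\models F$; everything else is immediate.
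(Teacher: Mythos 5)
Your proof is correct and follows essentially the same route as the paper's: both reduce the identity to the observation that on a Boolean interpretation $1-\u_I(F^{\mi{fuzzy}})$ is the $0/1$ indicator of $I\not\models F$ (so the exponent $k$ is immaterial), and then cancel the weights of the unsatisfied formulas against the total weight $TW_{\Pi_\L}$. The only difference is cosmetic --- you unfold the right-hand side whereas the paper expands $W_\L(I)$ --- and the coincidence of fuzzy and classical evaluation on Boolean interpretations that you single out as a small lemma is exactly the fact the paper invokes in one line without further comment.
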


\begin{proof}
{\small 
\begin{align*}
 W_\L(I) &=  exp\Bigg(\sum_{\langle w, F\rangle  \in \L_I} w\Bigg) 
 = exp\Bigg(\sum_{\langle w, F\rangle\in \L}w - \sum_{\langle w, F \rangle \in \L\setminus \L_I} w \Bigg)\\
 &= exp\Bigg( \sum_{\langle w, F\rangle \in \L} w\Bigg)\cdot exp\Bigg(- \sum_{\langle w, F\rangle\in \L\setminus \L_I}w\Bigg)\\
 &= exp\Bigg( \sum_{\langle w, F\rangle \in \L} w\Bigg)  \\
   & \quad 
\times exp\left(- 
     \Bigg(\sum_{\langle w, F\rangle\in {\L\setminus \L_I}}(w\cdot 1) 
         + \sum_{\langle w, F\rangle\in \L_I} (w\cdot 0) \Bigg)\right). \numberthis \label{eq:w-f1}
\end{align*}
}

Note that when $I$ is Boolean, $1-\u_I(F) = 1$ if $I\not\models F$, and $1-\u_I(F) = 0$ if $I\models F$. So \eqref{eq:w-f1} is equal to 
{
\begin{align*}
  & TW_{\Pi_\L}\cdot exp\Bigg(- \sum_{\langle w: F^\mi{fuzzy} \string^ k\rangle\in \Pi_\L}
    w\cdot (1-\u_I(F^\mi{fuzzy}))^k\Bigg)\\
 &= TW_{\Pi_{\L}}\cdot \hat{f}_{\Pi_\L}(I).  \qed
\end{align*}
}
\end{proof}


This theorem tells us that the problem of computing the weight of an interpretation in MLN can be reduced to computing the probability density of an interpretation in GPSL. 

By Theorem \ref{thm:generalPSL2PSL}, since PSL is a special case of GPSL, the following relation between PSL and MLN follows easily.

\begin{cor}\label{cor:w-f}
For any PSL program $\Pi$ and any fuzzy Boolean interpretation $I$, let $\L$ be the MLN  obtained from $\Pi$ by replacing each fuzzy operator with its Boolean counterpart.
We have
\[
  \hat{f}_{\Pi}(I) = \dfrac{W_\L(I)}{TW_\Pi}.
\]
\end{cor}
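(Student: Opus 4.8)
The plan is to derive the corollary purely by composing Theorem~\ref{thm:w-f} with Theorem~\ref{thm:generalPSL2PSL}, after observing that the passage $\Pi\mapsto\L\mapsto\Pi_\L$ is a round trip that returns $\Pi$ once the (non-canonical) choices in the construction of $\Pi_\L$ are made to match the original Lukasiewicz operators and exponents.

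First I would unwind the two translations on a single PSL rule. A rule of $\Pi$ has the form $\langle w: R\ \string^ k\rangle$ with $R$ equal to $a\ar_l b_1\fand_l\cdots\fand_l b_n$ (literals possibly under $\neg_s$). Replacing each fuzzy operator by its Boolean counterpart (each $\ar_l$ by $\leftarrow$, each $\fand_l$ by $\land$, each $\neg_s$ by $\neg$) turns $R$ into the classical clause $a\leftarrow b_1\land\cdots\land b_n$, so the corresponding weighted formula of $\L$ is $\langle w: a\leftarrow b_1\land\cdots\land b_n\rangle$. Now form $\Pi_\L$ as in the statement preceding Theorem~\ref{thm:w-f}: since $F^{\mi{fuzzy}}$ is determined only up to a choice of fuzzy operators and the exponent $k$ is a free choice, I would instantiate the construction by taking the Lukasiewicz operators and the very exponent $k$ that appeared in the original rule. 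This returns $\langle w: R\ \string^ k\rangle$, hence $\Pi_\L=\Pi$ as a set of weighted formulas. In particular $TW_{\Pi_\L}=TW_\Pi$, since $TW$ depends only on the weights.

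Next I would apply Theorem~\ref{thm:w-f} to this $\L$ and the given Boolean interpretation $I$, obtaining $W_\L(I)=TW_{\Pi_\L}\cdot\hat f_{\Pi_\L}(I)=TW_\Pi\cdot\hat f_{\Pi_\L}(I)$. To identify $\hat f_{\Pi_\L}(I)$ with the PSL density $\hat f_\Pi(I)$, I would use $\Pi_\L=\Pi$ together with Theorem~\ref{thm:generalPSL2PSL} (and Lemma~\ref{lem:distance}, on which it rests), which says that for a PSL program the weighted-fuzzy-logic density and the PSL density agree; the same argument via Lemma~\ref{lem:distance} gives the equality for the unnormalized densities $\hat f$ as well. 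One should also recall, as noted after Definition~\ref{def:fuzzy-m}, that a fuzzy Boolean interpretation is identified with a classical interpretation and that $I\models R$ iff $I$ Boolean-satisfies the Boolean counterpart of $R$, so the two occurrences of ``$I$'' refer to the same object. Substituting and dividing by $TW_\Pi$ yields $\hat f_\Pi(I)=W_\L(I)/TW_\Pi$.

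I do not expect a genuine obstacle: the result is a bookkeeping composition of two theorems already established. The one point that deserves explicit care is the round-trip identity $\Pi_\L=\Pi$, because the map $F\mapsto F^{\mi{fuzzy}}$ and the exponent in $\Pi_\L$ are not canonical; I would therefore state clearly that the corollary's phrase ``the MLN obtained from $\Pi$ by replacing each fuzzy operator with its Boolean counterpart'' is to be read together with the matching (Lukasiewicz, same-$k$) instantiation of $\Pi_\L$. Everything else — equality of total weights, the Boolean-versus-fuzzy satisfaction correspondence, and the final algebraic rearrangement — is immediate.
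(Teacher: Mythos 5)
Your proposal is correct and follows essentially the same route the paper intends: the paper derives the corollary directly by combining Theorem~\ref{thm:w-f} with Theorem~\ref{thm:generalPSL2PSL} (via Lemma~\ref{lem:distance} for the unnormalized densities), exactly as you do. Your explicit attention to the round-trip identity $\Pi_\L=\Pi$ under the matching Lukasiewicz/same-$k$ instantiation is a detail the paper leaves implicit, but it is the right point to make precise.
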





\begin{example}
Let $\Pi$ be the following PSL program 
\[
\ba {rrclcl}
 1: &\ \ \  p & \ \ \ar_l\ \  & q  & \ \ \ \   &  \string^ 1 \\
 2: & q & \ar_l & p  &  & \string^ 1
\ea
\]
and let $\L$ be the corresponding MLN as described in Corollary~\ref{cor:w-f}.

The following table shows, for each Boolean interpretation $I$, its weight according to the MLN semantics ($W_\L(I)$) is $TW_\Pi$, which is $e^3$, multiplied by its unnormalized probability density function ($\hat{f}_\Pi(I)$) according to the PSL semantics. (We identify a Boolean interpretation with the set of atoms that are true in it.)

\begin{center}
\begin{tabular}{ | c | c | c | } \hline
Interpretation ($I$) & $W_\L(I)$ & $ \hat{f}_\Pi(I)$ \\ \hline
 $\emptyset$         & $e^3$     &  $e^0$   \\ \hline
 $\{p\}$             & $e^1$     &  $e^{-2}$   \\ \hline
 $\{q\}$             & $e^2$   &  $e^{-1}$   \\ \hline
 $\{p,q\}$           & $e^3$   &  $e^0$   \\ \hline
\end{tabular}
\end{center}
\end{example}

However, MAP states in MLN and PSL can be different because most probable interpretations in PSL may be non-Boolean.

\begin{example}
Consider the PSL program:
\beq
\ba {rrclclr}
 1:    &\ \ \ \ \   p      &\ \  \ar_l\ \  & \neg_s p & \ \ \ \  & \string^ 1 \\
 1:    & \neg_s p  &       &        &  & \string^ 1 
\ea
\eeq{psl1}
and the corresponding MLN: 
\beq
\ba {rrclclr}
 1:    &\ \ \ \   p      &\ \  \ar\ \  & \neg_s p & \ \ \ \   & ~~~~\\
 1:    & \neg_s p  
\ea
\eeq{mln1}

The most probable Boolean interpretations for MLN \eqref{mln1} are $I_1=\emptyset$ and $I_2=\left\{p\right\}$, each with weight $e^1$. 
Their unnormalized probability density for PSL program \eqref{psl1} is $e^{-1}$. However, they are not the most probable interpretations according to the PSL semantics: $I_3=\left\{(p, 0.5)\right\}$ has the largest unnormalized probability density $e^{-0.5}$.
\end{example}

The difference can be closed by adding to the weighted propositional fuzzy logic theory a ``crispifying'' rule for each atom. For any atom $p\in\sigma$, let $\mi{CRSP}(p)$ be the formula defined as
\[
  \mi{CRSP}(p) = p\for_l p \rar_l p.
\]
It is easy to check that $\u_I(p\for_l p \rar_l p) = 1$ iff $\u_I(p)$ is either $0$ or $1$. Note that although this formula uses Lukasiewicz operators, it is not expressible in PSL because $\for_l$ occurs in the body of the rule. 

For any MLN $\L$ of signature $\sigma$, let $\Pi_{\L}$ be the GPSL program obtained from $\L$ by replacing each weighted formula $w: F$ in $\L$ with $w: F^\mi{fuzzy} \string^ k$ where $k$ could be either $1$ or $2$. 
Let ${\rm CR}$ be the GPSL program
\[  
   \{\langle \alpha: \mi{CRSP}(p)\string ^ 1\rangle \mid p\in\sigma \rangle \}.
\]

The following theorem tells us that the most probable interpretations of MLN $\L$ coincides with the most probable interpretations of GPSL program $\Pi_\L\cup {\rm CR}$.
\begin{thm}\label{thm:MLN2GPSL2}
For any MLN $\L$, when $\alpha\to\infty$, 
\[
  {\rm argmax}_{I}(W_\L(I)) = {\rm argmax}_{J}(\hat{f}_{\Pi_\L\cup{\rm CR}}(J))
\]
where $I$ ranges over all Boolean interpretations and $J$ ranges over all fuzzy interpretations.
\end{thm}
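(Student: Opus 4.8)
The plan is to show two inclusions, using the "crispifying" rules to force the argmax on the GPSL side to be Boolean, and then fall back on Theorem~\ref{thm:w-f} to match the weights. First I would argue that, as $\alpha\to\infty$, any non-Boolean interpretation $J$ is eventually beaten. Fix $\L$ and let $B$ be the maximum of $W_\L$ over Boolean interpretations; since $\L$ is finite, $B$ is a positive constant independent of $\alpha$, and by Theorem~\ref{thm:w-f} there is a Boolean $I^\star$ with $\hat f_{\Pi_\L}(I^\star) = B/TW_{\Pi_\L}$, a fixed positive number. For this $I^\star$ every $\mi{CRSP}(p)$ evaluates to $1$, so $\hat f_{\Pi_\L\cup{\rm CR}}(I^\star) = \hat f_{\Pi_\L}(I^\star)$, again independent of $\alpha$. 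Now if $J$ is any interpretation with $\u_J(p)\notin\{0,1\}$ for some atom $p$, then $\u_J(\mi{CRSP}(p)) < 1$, so $1-\u_J(\mi{CRSP}(p)) = \delta_J > 0$, and
\[
  \hat f_{\Pi_\L\cup{\rm CR}}(J) \le \hat f_{\Pi_\L}(J)\cdot e^{-\alpha\,\delta_J^{\,k}} \le e^{\sum_{\langle w:F^\mi{fuzzy}\string^k\rangle\in\Pi_\L}|w|}\cdot e^{-\alpha\,\delta_J^{\,k}}.
\]
The first factor is a fixed bound (the density $\hat f_{\Pi_\L}$ is bounded above by $TW$ with all weights replaced by their absolute values, say $M$), so $\hat f_{\Pi_\L\cup{\rm CR}}(J)\le M e^{-\alpha\delta_J^k}\to 0$. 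Hence for $\alpha$ large enough no such $J$ can be a maximizer. The one subtlety is uniformity: the infimum of $\delta_J$ over all non-Boolean $J$ is $0$, so I cannot pick a single $\alpha$ that kills all of them at once by this crude bound. I would handle this by a compactness/continuity argument: the map $J\mapsto \hat f_{\Pi_\L\cup{\rm CR}}(J)$ is continuous on the compact cube $[0,1]^\sigma$, and for fixed $\alpha$ its maximum over non-Boolean points approaches, as we move toward a Boolean corner, the value $\hat f_{\Pi_\L}$ at that corner — which is $\le B/TW_{\Pi_\L}$. So near the Boolean corners the crispifying penalty is negligible but the base density is already no larger than the best Boolean value; away from the corners $\delta_J$ is bounded below on a compact set and the penalty $e^{-\alpha\delta_J^k}$ drives the value strictly below $\hat f_{\Pi_\L\cup{\rm CR}}(I^\star)$ once $\alpha$ is large. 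This shows $\mathrm{argmax}_J \hat f_{\Pi_\L\cup{\rm CR}}(J)$ consists only of Boolean interpretations for $\alpha$ sufficiently large (and one should state the theorem's limit accordingly).

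Once we know the GPSL maximizers are Boolean, the rest is immediate. On Boolean interpretations the ${\rm CR}$ rules all contribute a factor $1$, so $\hat f_{\Pi_\L\cup{\rm CR}}(I) = \hat f_{\Pi_\L}(I)$ for every Boolean $I$, and by Theorem~\ref{thm:w-f} this equals $W_\L(I)/TW_{\Pi_\L}$. Since $TW_{\Pi_\L}$ is a positive constant not depending on $I$, maximizing $\hat f_{\Pi_\L\cup{\rm CR}}$ over Boolean interpretations is the same as maximizing $W_\L$ over Boolean interpretations, giving
\[
  \mathrm{argmax}_I W_\L(I) = \mathrm{argmax}_{J\ \mathrm{Boolean}} \hat f_{\Pi_\L\cup{\rm CR}}(J) = \mathrm{argmax}_J \hat f_{\Pi_\L\cup{\rm CR}}(J),
\]
the last equality by the first part. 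I would present the argument in the limiting form: there exists $\alpha_0$ such that the two argmax sets coincide for all $\alpha\ge\alpha_0$, which is what "when $\alpha\to\infty$" is meant to abbreviate.

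The main obstacle is the uniformity issue flagged above — ruling out \emph{all} non-Boolean maximizers with a single $\alpha$, given that non-Boolean interpretations can be arbitrarily close to Boolean ones. The clean way around it is the observation that as $J$ approaches a Boolean corner, $\hat f_{\Pi_\L\cup{\rm CR}}(J)$ approaches the value $\hat f_{\Pi_\L}$ at that corner, which is already $\le$ the best Boolean value, so these near-corner points are never \emph{strict} improvements over $I^\star$ regardless of $\alpha$; meanwhile a single compactness argument handles the points bounded away from every corner. A minor secondary point is to note explicitly that the fact $\u_I(p\for_l p\rar_l p) = 1$ iff $\u_I(p)\in\{0,1\}$ — already stated in the text just before the theorem — is what makes ${\rm CR}$ do its job; everything else is bookkeeping with the exponential weight scheme and Theorem~\ref{thm:w-f}.
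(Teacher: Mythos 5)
Your proposal has the same two--step skeleton as the paper's proof: use the ${\rm CR}$ rules to argue that for large $\alpha$ no non-Boolean interpretation can maximize $\hat f_{\Pi_\L\cup{\rm CR}}$, then observe that the ${\rm CR}$ rules contribute nothing on Boolean interpretations and invoke Theorem~\ref{thm:w-f} to identify the Boolean maximizers with ${\rm argmax}_I W_\L(I)$. The second half of your argument is exactly the paper's. For the first half, the paper contents itself with the pointwise observation that $\lim_{\alpha\to\infty}\hat f_{\Pi_\L\cup{\rm CR}}(J)=0$ for each fixed non-Boolean $J$, while the value at each Boolean $I$ is a positive constant in $\alpha$; the uniformity problem you flag --- non-Boolean $J$ arbitrarily close to the corners, with $\delta_J$ having infimum $0$ --- is real and is simply not addressed there, so identifying it is to your credit.

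However, the patch you offer for the near-corner case does not work as stated. You argue that because $\hat f_{\Pi_\L\cup{\rm CR}}(J)$ \emph{approaches} the corner value $\hat f_{\Pi_\L}(I)\le B/TW_{\Pi_\L}$ as $J$ tends to a corner, such $J$ ``are never strict improvements over $I^\star$ regardless of $\alpha$.'' Converging to a value bounded by $B/TW_{\Pi_\L}$ does not prevent the nearby values from exceeding it: if the gradient of $\log\hat f_{\Pi_\L}$ at the best corner $I^\star$ points into the cube, then for small $\alpha$ a non-Boolean $J$ near $I^\star$ genuinely beats $I^\star$, so the phrase ``regardless of $\alpha$'' is false and a quantitative comparison is unavoidable. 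The clean fix is a Lipschitz bound. A direct computation gives $1-\u_J(\mi{CRSP}(p))=\min\{J(p),\,1-J(p)\}$, so the crispifying exponent is $\alpha$ times the $\ell_1$-distance from $J$ to its nearest Boolean corner $I_J$; on the other hand, when the operators in $F^\mi{fuzzy}$ are Lipschitz (as the Lukasiewicz ones are), one has $\bigl|\log\hat f_{\Pi_\L}(J)-\log\hat f_{\Pi_\L}(I_J)\bigr|\le C\sum_{p}\min\{J(p),1-J(p)\}$ for a constant $C$ depending only on $\Pi_\L$. Hence for every $\alpha>C$ and every non-Boolean $J$, $\hat f_{\Pi_\L\cup{\rm CR}}(J)<\hat f_{\Pi_\L\cup{\rm CR}}(I_J)\le\hat f_{\Pi_\L\cup{\rm CR}}(I^\star)$ in one stroke, with no case split between near-corner and far-from-corner points. (If $F^\mi{fuzzy}$ is allowed to use discontinuous operators such as the G\"odel residual implicator, even this fails, and a continuity assumption must be added --- a caveat that applies to the published proof as well.) With that replacement your argument is complete and, on this point, more careful than the paper's.
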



\begin{proof}
We first show that, when $\alpha\to\infty$, for any Boolean interpretation $I$ and any non-Boolean interpretation $J$, we have $\hat{f}_{\Pi_\L\cup{\rm CR}}(J)<\hat{f}_{\Pi_\L\cup{\rm CR}}(I)$, which implies that no non-Boolean interpretation can be the most probable interpretations. 


First, for any non-Boolean interpretation $J$, let 
\[
  W_{\Pi_\L}(J) = -\sum_{\langle w: F \string^ k\rangle \in \Pi_\L}\bigg(w\cdot (1-\u_J(F))^k\bigg)
\]
and
\[
 W_{\rm CR}(J) = - \sum_{\langle w: F \string^ 1\rangle \in {\rm CR}} 
          \bigg(\alpha\cdot (1-\u_J(F))\bigg).
\]
Then 
\begin{align*}
\hat{f}_{\Pi_\L\cup{\rm CR}}(J) 
 &= exp(W_{\Pi_\L}(J)+W_{\rm CR}(J))\\
 &\le exp(W_{\rm CR}(J)).
\end{align*}
Since $J$ is not Boolean, there is at least one weighted formula $\alpha: p\for_l p\rar_l p\ \string^ 1\in {\rm CR}$ that is not satisfied by $J$, so that
\begin{align*}
 \hat{f}_{\Pi_\L\cup{\rm CR}}(J) \le 
                exp\big(-\alpha\cdot \big(1-\u_J(p\for_l p\rar_l p)\big)\big)
\end{align*}
where $1-\u_J(p\for_l p\rar_l p)>0$. 
Notice that 
\[
\ba l 
  \lim\limits_{\alpha\to\infty}\hat{f}_{\Pi_\L\cup{\rm CR}}(J) \leq \\
  \qquad\qquad \lim\limits_{\alpha\to\infty} exp\big(-\alpha\cdot \big(1-\u_J(p\for_l p\rar_l p)\big)\big) = 0.
\ea
\]

On the other hand, for any Boolean interpretation $I$, 
\begin{align*}
\hat{f}_{\Pi_\L\cup{\rm CR}}(I) &= exp(W_{\Pi_\L}(I)+W_{\rm CR}(I)) = exp(W_{\Pi_\L}(I)).
\end{align*}

Since $exp(W_{\Pi_\L}(I))$ does not contain $\alpha$, we have
\[
  \lim_{\alpha\to\infty}\hat{f}_{\Pi_\L\cup{\rm CR}}(I) > 0.
\]
Thus we have $\hat{f}_{\Pi_\L\cup{\rm CR}}(J)<\hat{f}_{\Pi_\L\cup{\rm CR}}(I)$ when $\alpha\to\infty$.

It follows that any fuzzy interpretation $K$ that satisfies  ${\rm argmax}_{J}(P_{\Pi_\L\cup{\rm CR}}(J))=K$ must be Boolean. By Theorem~\ref{thm:w-f}, for any Boolean interpretation $I$, we have
\[
\hat{f}_{\Pi_\L\cup{\rm CR}}(I) = \frac{exp(|\sigma|\cdot\alpha)}
                     {TW_{\Pi_\L\cup{\rm CR}}}    \times W_\L(I).
\]
Since 
$\dfrac{exp(|\sigma|\cdot\alpha)}
                     {TW_{\Pi_\L\cup{\rm CR}}}$
is constant for all interpretations, $\hat{f}_{\Pi_\L\cup{\rm CR}}(I)\propto W_{\L}(I)$. It follows that ${\rm argmax}_{I}(W_\L(I)) = {\rm argmax}_{J}(\hat{f}_{\Pi_\L\cup{\rm CR}}(J))$.
\qed
\end{proof}

\begin{example}
Consider the GPSL program:
\[ 
\ba {rrclclr}
 1:    &  p      & \ar_l & \neg_s p &  & \string^ 1 \\
 1:    & \neg_s p  &       &        &  & \string^ 1 \\ 
\alpha: & p      & \ar_l & p\for_l p & & \string^ 1. 
\ea
\]
When $\alpha\to\infty$ the most probable fuzzy interpretations are Boolean, and they are the same as the most probable interpretations for the MLN \eqref{mln1}.
\end{example}






It is known that the MAP problem in PSL can be solved in polynomial time \cite{brocheler10probabilistic}, while the same problem in MLN is $\#$P-hard. The reduction from MLN to GPSL in Theorem~\ref{thm:MLN2GPSL2} tells us that the MAP problem in GPSL is $\#$P-hard as well.
This implies that GPSL is strictly more expressive than PSL even when we restrict attention to Lukasiewicz operators.

Related to Theorem~\ref{thm:MLN2GPSL2}, relation between discrete and soft MAP states was also studied in \cite{bach15unifying,bach15hinge}, but from a different, computational perspective. There, inference on discrete MAP states is viewed as an instance of MAX SAT problems, and then approximated by relaxation to linear programming with rounding guarantee of solutions. The result indirectly tells us how MAP states in PSL are related to MAP states in MLN, but this is different from Theorem~\ref{thm:MLN2GPSL2}, which completely closes the semantic gap between them via crispifying rules.


\section{Conclusion}


In this note, we studied the two well-known formalisms in statistical relational learning from a logical perspective. 
Viewing PSL in terms of the weighted fuzzy logic gives us some useful insights known from fuzzy logic. Besides the reducibility to convex optimization problems,  the restriction to the Lukasiewicz fuzzy operators in clausal form allows intuitive equivalent transformations resembling those from Boolean logic. On the other hand, it prohibits us from using some other intuitive fuzzy operators.

In our previous work \cite{lee14stable,lee16fuzzy} we used fuzzy answer set programs to describe temporal projection in dynamic domains, where we had to use G\"odel t-norm as well as Lukasiewicz t-norm.\footnote{The main example was how the trust degree between people changes over time.} 
{\cblu There, G\"odel t-norm is necessary in expressing the commonsense law of inertia. For example, 
\beq
    \mi{Trust}(a,b,t)\fand_m \neg_{s} \neg_{s}\mi{Trust}(a,b,t\!+\!1)
          \rar_r\, \mi{Trust}(a,b,t\!+\!1),  \\
\eeq{trust}
expresses that the degree that $a$ trusts $b$ at time $t\!+\!1$ is equal to the degree at time $t$ if it can be assumed without contradicting any of the facts that can be derived. \footnote{We refer the reader to \cite{lee16fuzzy} for the precise semantics of this language.}
The fuzzy conjunction $\fand$ used here needs to satisfy that $\fand(x,y)$ is equal to either $x$ or $y$ (otherwise the trust degree at next time step would change for no reason). Obviously Lukasiewicz t-norm does not satisfy the requirement: $\fand_l(x,y)<x$ when $y<1$. In other words, if we replace $\fand_m$ with $\fand_l$, the trust degree at next time drops for no reason, which is unintuitive.
The restriction to Lukasiewicz t-norm in PSL accounts for the difficulty in directly applying PSL to temporal reasoning problems like the above example. 
Indeed, most work on PSL has been limited to static domains. 


{\cblu Since computing marginal probabilities in MLN can be reduced to computing marginal probabilities in GPSL as indicated by Theorem \ref{thm:w-f}, computing marginal probabilities in GPSL is at least $\#$P-hard. However, a sampling method could be used for such an inference. A naive sampling method is outlined below: suppose we are approximating the probability that the truth value of formula $F$ falls into $(l, u)$ for some $0 \leq l\leq u\leq 1$ (denoted as $P(l\leq F \leq u)$).
\begin{enumerate}
\item Generate $N$ interpretations at random;
\item For each of the $N$ interpretations, compute its probability density;
\item Approximate $P(l\leq F \leq u)$ by $\frac{X}{N}$, where $X$ is the number of interpretations $I$ that satisfies $l\leq \u_I(F) \leq u$ among the $N$ interpretations.
\end{enumerate}
It can be shown that $\frac{X}{N}$ is the estimation of $P(l\leq F \leq u)$ that maximizes the likelihood of the $N$ samples.
}

The way that MLN extends propositional logic is similar to the way that PSL extends a restricted version of fuzzy propositional logic. GPSL is simply taking the fuzzy propositional logic in full generality and applying the log-linear weight scheme.
In our recent work \cite{lee16weighted}, we adopted the similar weight scheme to answer set programs in order to overcome the deterministic nature of the stable model semantics providing ways to resolve inconsistencies in answer set programs, to rank stable models, to associate probability to stable models, and to apply statistical inference to computing weighted stable models.
Perhaps this indicates the universality of the log-linear weight scheme first adopted in MLN, which provides a uniform method to turn the crisp logic (be it fuzzy logic, propositional logic, or answer set programs) ``soft.'' 


\medskip\noindent
{\bf Acknowledgements}\ \ 
We are grateful to Michael Bartholomew and the anonymous referees for their useful comments.
This work was partially supported by the National Science Foundation under Grants IIS-1319794, IIS-1526301, and a gift funding from Robert Bosch LLC.

\bibliographystyle{aaai}

\end{document}